\newcommand{\red}[1]{\color{black} #1 \color{black}}
\newcommand{\CC}{\mathcal{C}}
\newcommand{\DD}{\mathcal{D}}
\newcommand{\E}{\mathbb{E}}
\newcommand{\R}{\mathbb{R}}
\newcommand{\eps}{\epsilon}
\newcommand{\mc}[1]{\mathcal{#1}}
\newcommand{\mb}{\mathbb}
\newcommand{\cX}{\mc{X}}
\newcommand{\cY}{\mc{Y}}
\newcommand{\cP}{\mc{P}}
\newcommand{\cQ}{\mc{Q}}
\newcommand{\cS}{\mc{S}}
\newcommand{\II}[1]{\mathrm{I}\left(#1\right)}
\newcommand{\VC}{\mathsf{VC}}
\begin{document}

\title{Average-Case Information Complexity of Learning}

\author{\name Ido Nachum \email idon@tx.technion.ac.il \\
       \addr Department of Mathematics\\
       Technion-IIT
       \AND
       \name Amir Yehudayoff \email amir.yehudayoff@gmail.com \\
       \addr Department of Mathematics\\
       Technion-IIT
       }

\editor{}

\maketitle

\begin{abstract}%   <- trailing '%' for backward compatibility of .sty file
How many bits of information are revealed by a learning algorithm for a concept class of VC-dimension $d$? Previous works have shown that even for $d=1$
the amount of information may be unbounded (tend to $\infty$ with the universe size). Can it be that all concepts in the class require leaking a large amount of information? We show that typically concepts do not require leakage. There exists a proper learning algorithm that reveals $O(d)$ bits of information for most concepts in the class. % in $\CC$. 
This result is a special case of a more general phenomenon we explore.
If there is a low information learner when the algorithm {\em knows} the underlying distribution on inputs, then there is a learner that reveals little information  on an average concept {\em without knowing} the distribution on inputs.
\end{abstract}

%\begin{keywords}
%  Bayesian Networks, Mixture Models, Chow-Liu Trees
%\end{keywords}

\section{Introduction}

The high-level question that guides this paper is:
\begin{center}
 when is learning equivalent to compression?
 \end{center} 
 Variants of this question were studied extensively throughout the years in many different contexts. Recently, its
 importance grew even further due to the growing complexity of learning tasks.
 In this work, we measure compression using information theory. Our main message is that, in the framework we develop, learning implies compression.

It is well-known that in many contexts,
the ability to compress implies learnability.
Here is a partial list of examples: sample compression schemes~\cite{littlestone1986relating,moran2016sample},
Occam's razor~\cite{blumer1987occam},
minimum description length~\citep{rissanen1978modeling, grunwald2007minimum},
and differential privacy~\cite{dwork2006calibrating,dwork2015preserving, bassily2016algorithmic, rogers2016max, bassily2014private}.
We refer the interested reader to \cite{RAG,ALT18} 
for more details.

%  The relationship between learning and mutual information is also deeply related to the
% PAC-Bayes bound. For example, one can derive generalization guarantees based on information theoretic compression,
% using the PAC-Bayes bound.

We use the setting of~\cite{RAG} and~\cite{ALT18}, where the value of interest is the mutual information $I(S;A(S))$ between the input sample $S$ and the output of the learning algorithm $A(S)$.  
%The authors of 
\cite{RAG} and \cite{ALT18} 
suggested that studying this notion
may shed additional light on our understanding
of the relations between compression
and learning.

The rational is that compression is, in many cases,
an information theoretic notion,
so it is natural to use information theory to quantify
the amount of compression a learning algorithm
performs.
{The quantity $I(S;A(S))$ is a natural
information theoretic measure for the amount of
compression the algorithm performs.
Additional motivation comes from the connections to
privacy, which is about leaking little information
while maintaining functionality.}

In the information theoretic setting, 
\cite{RAG} and \cite{ALT18}  showed that for every learning algorithm
for which the information $I(S;A(S))$ is much smaller
than the sample size $m$,
the true error and the empirical error are typically close.
{This highlights the following simple
thumb rule for designing learning algorithm: 
try to find an algorithm that
has small empirical error but in the same time reveals
a small amount of information on its input.}
%In a nutshell, this result can be summarized
%as saying that \emph{compression $\Rightarrow$ learning}
%in this context.

%
%In their work, they prove that a learning algorithm  that reveals little information on the sample generalizes well (for any class of hypotheses with bounded loss function).
% \begin{equation}\label{comp_gen}
%\Pr \left( \left|\text{true error} - \text{empirical error}\right| > \varepsilon \right)
%< O\left(\frac{I(S;A(S))}{m\varepsilon^2}\right) 
%\end{equation}
%
%The result makes sense, since  if the mutual information is small compared to $m$ it means that the output retains only a small amount of data from the sample, so it cannot overfit. And vice versa, if the mutual information is of the magnitude of $m$, there is a possibility that the algorithm will just memorizes the sample, and so it will not generalize well. 

%
%Stated simply, the above result asserts that \emph{compression $\Rightarrow$ learning}, which is another manifestation of the close relation between learning and data compression. Intuitively, compression is possible when there are patterns or regularities in the data, and we use them to construct a representation that is shorter than the given description of the data. Similarly, to learn and generalize one needs to identify patterns or regularities in the training data, in order to make predictions on unseen data that is expected to exhibit similar patterns.
% 

What about the other direction? Is it true that \emph{learning $\Rightarrow$ compression} in this context? 
~\cite{ALT18} answered this question for the class of thresholds 
and~\cite{COLT18} extended the result for classes of VC-dimension $d$
{(see Section~\ref{section-preliminaries} for notations)}.

\begin{theorem}[\cite{ALT18,COLT18}]\label{info_comp}
For every $d$ and every $m \geq 2 d^2$, there exists a class $\CC \subset \{0,1\}^\cX$ of VC-dimension $d$ such that for any proper and consistent (possibly randomized) learning algorithm, there exists a hypothesis $h\in \CC$ and a random variable $X$ over $\cX$ such that  $I(S;A(S))=\Omega (d \log \log (|\cX |/d))$ where $S \sim (X,h(X))^m$.
\end{theorem}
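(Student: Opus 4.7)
My plan is to decompose the theorem into two steps: first prove the lower bound for the case $d = 1$ (the class of thresholds on a large universe), then lift to arbitrary VC-dimension $d$ via a disjoint-union product construction.

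For the threshold case, take $\cX = [N]$ with $N = |\cX|$ and let $\CC = \{h_t : t \in [N]\}$ with $h_t(x) = \mathbf{1}[x \le t]$. The key structural observation is that a proper and consistent learner on $\CC$ must output some threshold $t' = t'(S)$ satisfying $\max\{x_i : y_i = 1\} \le t' < \min\{x_i : y_i = 0\}$. To force large information leakage, I would construct a hierarchy of hard input distributions at $\ell = \Theta(\log \log N)$ doubly-exponential scales: the $i$-th distribution $X_i$ places its mass on a window of width roughly $2^{2^i}$ around a carefully chosen point. With a sample of size $m$, the labeled examples only coarsely localize the underlying threshold, yet the consistency constraint forces the output $t'$ to be a specific integer inside the remaining "gap", so $A(S)$ must encode roughly one additional bit per scale about where $t$ actually sits. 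An averaging argument over scales then extracts a single pair $(X, h)$ for which $I(S; A(S)) = \Omega(\log \log N)$. Randomization is handled by writing $A$ as a mixture of deterministic algorithms and either applying the deterministic bound to a typical draw via Markov, or by using convexity properties of mutual information.

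To lift to VC-dimension $d$, partition $\cX$ into $d$ parts $\cX_1, \ldots, \cX_d$ of roughly equal size $|\cX|/d$, and take the product class
\[
\CC = \left\{ x \mapsto \sum_{j=1}^d \mathbf{1}[x \in \cX_j]\, h_{t_j}(x) : t_j \in \cX_j \right\},
\]
which has VC-dimension exactly $d$. Let $X$ be the uniform mixture $\tfrac{1}{d}\sum_j X^{(j)}$, where $X^{(j)}$ is the hard threshold distribution on $\cX_j$ from the previous step. The assumption $m \geq 2 d^2$ guarantees that in expectation each $\cX_j$ receives at least $2d$ samples, which is enough to invoke the $d=1$ bound in each part. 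Because a proper consistent learner for $\CC$ restricts to a proper consistent learner on each $\cX_j$, and because the samples landing in different parts are conditionally independent, the per-part information contributions are roughly additive; summing the $d$ contributions of $\Omega(\log \log(|\cX|/d))$ each yields the claimed $\Omega(d \log \log(|\cX|/d))$ bound, and one final averaging selects a single hypothesis $h \in \CC$ realizing it.

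The hard step is the threshold lower bound itself: designing the multi-scale distributions and showing that the algorithm cannot avoid leaking a constant number of bits at each of the $\log \log N$ scales. This requires a delicate trade-off between how much of $t$ is already pinned down by the labels on $S$ and how much additional information the consistency constraint on $A(S)$ forces the output to contribute beyond $S$. Once this $\log\log$ bound is established for $d=1$, handling randomization and performing the $d$-fold product argument are largely routine bookkeeping.
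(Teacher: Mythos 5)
This theorem is not proved in the paper at all --- it is stated as a citation from prior work, with the $d=1$ threshold lower bound coming from \cite{ALT18} and the lift to VC-dimension $d$ being the main result of \cite{COLT18}. Your high-level decomposition (prove $\Omega(\log\log N)$ for thresholds via a hierarchy of doubly-exponential scales, then lift to a disjoint union of $d$ threshold classes) is exactly the route those two papers take, so at the level of strategy you have reconstructed the intended argument.

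However, there is a genuine gap in how you handle the lifting step. You claim that ``because the samples landing in different parts are conditionally independent, the per-part information contributions are roughly additive'' and that this is ``largely routine bookkeeping.'' It is not. A proper consistent learner for the product class outputs a single hypothesis whose restriction $t'_j$ to $\cX_j$ may depend on \emph{all} of $S$, not merely the samples that landed in $\cX_j$; the bucket counts are themselves random (multinomial); and the $t'_j$ can be arbitrarily correlated across $j$, so $I(S;A(S))$ does not decompose as $\sum_j I(S_j;t'_j)$ by any simple chain-rule manipulation. Establishing the needed inequality is a \emph{direct sum theorem} for information complexity of learning, and it is precisely the non-trivial contribution of \cite{COLT18} (the paper's title is literally ``A Direct Sum Result for the Information Complexity of Learning''). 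Your sketch elides the step that the entire second reference was written to prove, and without it the argument does not go through. If you want to fill this in, you would need to show that the $d=1$ lower bound is robust to the learner receiving auxiliary side information (the samples in the other buckets), and then perform a careful conditioning/chain-rule argument that accounts for the shared output channel --- this is the technical heart of the result, not bookkeeping.
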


The theorem can be interpreted as saying
that no, learning does not imply compression in this context.
%means that in the worst case, compression for the class of thresholds is not possible. 
In some cases, for any consistent and proper algorithm, there is always 
{a scenario in which a large amount of information 
is revealed.}
%\red{Explain in what sense this answers the question of \emph{learning $\Rightarrow$ compression}. This is only a partial answer, that shows a lower bound on the amount of compression that learning guarantees. It doesn't give a complete characterization. Would be good to mention that this is left as an open question in the discussion of \cite{COLT18}.}
\red{

In this work,  we shift our attention from a \textit{worst-case} analysis to an \textit{average-case} analysis. In the average-case setting, we show that every prior distribution $\cP$ over $\CC\subset \{0,1\}^\cX$ of VC-dimension $d$ admits an algorithm that \textit{typically} reveals $O(d)$-bits of information on its input (there is an unbounded difference between the worst-case and the average-case).

\[\emph{learning $\Rightarrow$ compression (on average)} \tag{Theorem  \ref{comp_on_average}} \]

This result is a special case of a more general phenomenon we explore.
If there is a low information learner when
the algorithm {\em knows} the underlying distribution on inputs,
then there is a learner that reveals little {information  on an average concept {\em without knowing}
the distribution on inputs (Lemma \ref{minimax_algo}).}

The average-case framework
%(namely, when there is also a distribution on $\CC$)
%corollary is a divergence 
is different than the standard worst-case 
PAC setting. 
In the standard model, the teacher (or nature) is 
thought of as being adversarial and 
is assumed to have perfect knowledge of the learner's strategy. 
\begin{itemize}
\item From a practical point of view, it is not obvious that such strong assumptions about the environment should be made, since worst-case analysis seems to fail when trying to explain real-life learning algorithms.
\item From a biological perspective, to survive, a living organism must perform many tasks (concept class). No human can perform well on all of them
(worst case analysis). What matters for survival, is to be able to perform well on most tasks (average case learning).
\end{itemize}

The average-case framework we study also provides a general mechanism for proving upper bounds on the average sample complexity for classes of functions (not necessarily binary or with 0-1 loss). This framework (“The Information Game”)
allows the user the freedom to apply his prior knowledge when trying to solve a learning problem. For example, the user can pick only distributions that make sense in his setting (see \hyperref[Disc]{Discussion}).

}
%In practice, 
%The average-case setting provides a formalism
%that makes sense in scenarios
%where there is some a priori knowledge or a guess
%on the way a concept is generated.

\subsection*{Related Work}\label{related_work}

\subsubsection*{Information Complexity
in Learning}

\cite{Feldman2018} used the information theoretic setting and proved generalization bounds for performing adaptive data analysis. In this setting, a user asks a series of queries over some data. Every new query the user decides to ask depends on the answers to the previous queries.

\cite{Asadi2018} applied the information theoretic setting for achieving generalization bounds that depend on the correlations between the functions in the class together with the dependence between the input and the output of the learning algorithm. They mostly investigated Gaussian processes.

\subsubsection*{Average-Case Learning} 

Here is a brief survey of other works that deviate from the worst-case analysis of the PAC learning setting.

%Haussler et al.~
\cite{Haussler1994} studied how the sample complexity depends on properties of a prior distribution on the class $\CC$ and over the sequence of examples the algorithm receives.
Specifically, they studied the probability of an incorrect prediction for an optimal learning algorithm using the Shannon information gain.
They also studied stability in the context they investigated.

%Part of Wan's work~
\cite{Wan} considered the problem of learning DNF-formulas. He suggested learning when the formulas are sampled from the uniform distribution
and the distribution over the domain is uniform as well.
% \red{and the actual samples as well. -- What does that last part mean?} 

%Reischuk and Zeugmann 
\cite{ReischukZeugmann} considered the problem of learning monomials.  They analyzed the average-case behavior of the Wholist algorithm with respect to the class of binomial distributions. %and study to different complexity measures.
% \red{don't use "respect" twice in the same sentence.}

Finally, we note that many of the lower bounds
on the sample complexity of learning algorithm
can be casted in the ``on average'' language.
In many cases,
the lower bound is proved by choosing an 
appropriate distribution on the concept class
$\CC$.

%
%\textbf{Differential privacy.} 
%The mutual information $I(S;A(S))$ is also a bound on the number of bits that the learning algorithm can reveal about the training set, so naturally the setting of differential privacy comes to mind.  Introduced by Dwork et al. \cite{dwork2006calibrating}, differential privacy is a principled notion of privacy that has recently been studied extensively because it provides strong privacy guarantees to data sources in a rigorous way. Differential privacy is also measure of stability for the algorithm so it plays a roll in controlling overfitting, as several recent works  have shown \cite[e.g.][]{dwork2015preserving, bassily2016algorithmic, rogers2016max, bassily2014private}.

\subsubsection*{Channel Capacity} The information game is also 
relevant in the following information theoretic scenario. Player two wants to transmit a message through a noisy channel that has several states $S$ and player one wants to prevent that
by appropriately choosing $S$. 
In the game, player two chooses a distribution on $\cX$.
Player one chooses a state $S$
that defines the channel;
i.e., $p_S(Y|X=x)$ is the distribution on 
the transmitted data $Y$ conditioned on the input being $x$.
%In each transmission player one can choose the channel through which the transmission passes. %$P_{S=s}(Y|X)$ (analogous to choosing a learning algorithm). 
%Player two can choose any distribution $p(x)$ on $X$. (analogous to choosing a r.v. on $\cX ^m$). By the minimax theorem this game has an equilibrium point
By the minimax theorem
this game also has an equilibrium point.
\[
\max_{X} \min_{S}  I(X;Y)=  \min_{S} \max_{X}  I(X;Y).
\]
Other variants of this scenario can be found in chapter 7 of \cite{gamal}.

%
% Under the game perspective:
%\begin{itemize}
%	\item Player two does not have perfect knowledge of player one's algorithm. He cannot observe the algorithm player one picked and then decide on a hypothesis and a distribution. This is captured by the fact that player two chooses a hypothesis according to a fixed prior distribution $\cP$.
%	\item Not all distributions are necessarily allowed, only the convex hull of a previously decided set of i.i.d distributions $\DD$. We are allowing the use of prior knowledge  on the world regarding plausible types of distributions and hence we can avoid pathologies.
%\end{itemize}
%

%\red{better not to intorduce new claims in the discussion. Consider moving the $\varepsilon$-net to its own section.}
%The proof of Theorem \ref{comp_on_average} uses the VC-dimension only once. If the VC-dimension is $d$ then there is an $\epsilon$-net of size $(O(1)/\epsilon)^d$. In other words, small $\epsilon$-nets yield a learning algorithm that works not only for i.i.d distributions but on any symmetric distribution (Definition \ref{sym_dis}).  

\section{Preliminaries}\label{section-preliminaries}
Here we provide the basic definitions
that are needed for this text,
and provide references that contain more details
and background.

\subsubsection*{Notation}

{We identify between random variables
and the distributions they define.
The notation $S \sim (X,h(X))^m$
means that $S$ consists of $m$
i.i.d.\ pairs of the form $(x_i,h(x_i))$
where $x_i$ is distributed as $X$.}

{Big $O$ and
$\Omega$ notations in this text hide absolute constants.}

%\begin{notation}
%	$\mb{N} = \{1,2,\dots\}$.
%\end{notation}

%\begin{notation}
%	Let $n \in \mb{N}$. $[n]$ denotes the set $\{1,2,\dots,n\}$.
%\end{notation}

%
%\begin{notation}\label{definition-delta}
%	Let $\cX$ be a set. The support of a probability function $p:\cX \to [0,1]$ is the subset of elements of $\cX$ that have a positive probability, $\supp(p)=\{x\in \cX:\: p(x)>0\}$. We use $\Delta(\cX)$ to denote the set of all probability mass functions over $\cX$ that have a finite support. If $p$ is a probability function, we use $p^m$ to denote the probability function over $\cX^m$ that corresponds to the probability of performing $m$ i.i.d.\ samples from $\cX$ according to $p$: $p^m((x_1,\cdots,x_m)) = \Pi_{i=1}^m p(x_i)$.
%\end{notation}
%
%\begin{notation}
%	Let $f,g:\:\mathbb{R}^n\rightarrow\mathbb{R}$ be functions. We write $f(x_1,\dots,x_n) = \Omega(g(x_1,\dots,x_n))$ to denote the following assertion: There exist constants $\alpha,\beta,t >0$ such that for any vector $(x_1,\dots,x_n)$ where $x_i\geq t$ for all $i\in[n]$, it holds that $f(x_1,\dots,x_n) \geq \alpha\cdot g(x_1,\dots,x_n) - \beta$.
%\end{notation}

\subsubsection*{Learning Theory}

Part I of \cite{shalev2014understanding} provides an excellent comprehensive introduction to computational learning theory. Following are some basic definitions. %that are used in this thesis. 

%\begin{definition}
	Let $\cX$ and $\cY$ be sets. 
	A set $\CC \subseteq \cY^\cX$ is called {a class of hypotheses}.
%\end{definition}
%
%In the following definitions, assume that $\cH \subseteq \cY^\cX$ is a class of hypotheses. 
%
%\begin{definition}
	$\cS = \cX \times \cY$ is called the {sample space}.
%\end{definition}
%
%\begin{definition}
	A {realizable sample for $\CC$ of size $m$} is 
	\[
	S = \big((x_1,y_1), \dots, (x_m,y_m) \big) \in \cS^m
	\]
	such that there exists $h \in \CC$ satisfying $y_i=h(x_i)$ for all $i \in [m]$.
%\end{definition}

%\begin{definition}\label{def_learning_algo}
	A {learning algorithm $A$ for $\CC$ with sample size $m$} is a (possibly randomized) algorithm that takes a realizable sample $S = ((x_1,y_1), \dots, (x_m,y_m))$ for $\CC$ as input, and returns a function $h:\cX\rightarrow \cY$ as output. We say that the learning algorithm is {consistent} if the output $h$ always satisfies $y_i=h(x_i)$ for all $i \in [m]$. We say the algorithm is {proper} if it outputs members of $\CC$.
%\end{definition}

%\begin{definition}
The \emph{empirical error} of $A$ with respect to $S$ and a function $h\in \CC$  is  
 $$\text{error}(A; S)= \frac{1}{m}\sum_{i=1}^{m}   L_h(x_i,A(S)(x_i)),$$ where {$L_h : \cX \times \cY \to \R$ is the loss function.}
%\end{definition}
%\begin{definition}
The \emph{true error} of  $A$ with respect to a random variable  $X$ over $\cX$ and a function $h\in \CC$ is defined to be 
$$\text{error}_h(A; X)=  \E_{x \sim X}L_h(x,A(S)(x)).$$
	The class $\CC$ {shatters} some finite set $S\subseteq \cX$ if 
	%the restriction $\cH_C$ of $\cH$ to $C$ is the set of all functions from $\cX$ to $\cY$, that is 
	$\CC\big|_{S}=\cY^S$. 
%	\red{What does $\CC_S$ mean? it's not a standard notation..}
	The {VC-dimension of $\CC$} denoted $\VC(\CC)$ is the maximal size of a set $S \subseteq \cX$ such that $\CC$ shatters $S$. 
	%If $\CC$ can shatter sets of arbitrary size, we say that the VC-dimension is $\infty$.
%\end{definition}

%As cited above, $\cH$ is PAC learnable if and only if it has finite VC dimension, and furthermore, the VC dimension gives tight bounds on the sample complexity.

\subsubsection*{Information Theory}

	Let $\cX$ be a finite set, and let $X$ be a random variable over $\cX$ with probability mass function $p$ such that $p(x)=\Pr(X = x)$. The {entropy of $X$} is\footnote{$\log (x)$ is a shorthand for $\log_2(x)$, and we use the convention that $0\log\frac{1}{0}=0$.}
	%denoted $\HH(X)$ and defined
	%\[
$$H(X) = \sum_{x \in \cX} p(x)\log\frac{1}{p(x)}.$$
%Intuitively, $\HH(x)$ is the amount of information that $X$ contains, measured in bits. 
%More formally, Shannon's source coding theorem states that the entropy of $X$ is a lower bound on the expected number of bits per sample that are necessary for a lossless encoding of a sequence of $n$ i.i.d.\ samples of $X$, and that by taking large enough values of $n$ one may construct codes that come arbitrarily close to this number.
	%Let $X$ and $Y$ be random variables over countable sets $\cX$ and $\cY$ respectively. 
	The mutual information between two random variables $X$ and $Y$ is $$I(X;Y) = H(X)+H(Y)-H(X,Y).$$
%\end{definition}
See the textbook \cite{cover2012elements} for additional basic definitions and results from information theory which are used throughout this paper.

\subsubsection*{Average Complexity}

%\begin{definition}
	Let $A$ be a learning algorithm for $\CC$ with sample size $m$, %let $A(S)$ be the output of $A$ when executed with input $S$, 
	and  let $\cP$ be a probability distribution on $\CC$. We say that $A$ has \emph{average information complexity} of  $d$ bits with respect to $\cP$, if all  random variables $X$ over $\cX$ 
	 satisfy
	\[
	\E_{h \sim \cP} I(S_h;A(S_h)) \leq d .
	\]
%	where $I_h(S;A(S))  \triangleq I(S;A(S))$ with $S \sim (X,h(X))^m$.
%\end{definition}
%
%\subsubsection*{Average Sample complexity}
%
%\begin{definition}\label{av_sam_comp}
%	Let $A$ be a learning algorithm for $\CC$ with sample size $m$, let
%	 $A(S)$ be the output of $A$ when executed with input $S$, and  
%	let $\cP$ be a prior on $\CC$. 
We say that $A$ has error $\eps$,
confidence $1-\delta$, and \emph{average sample complexity} $M$ with respect to $\cP$, if for all random variables $X$ over $\cX$ and all $m\geq M$,
	\[
	\E_{h \sim \cP} \Pr ( \text{error}_h(A(S_h); X) > \epsilon ) < \delta .
	\]

%	where $I_h(S;A(S))  \triangleq I(S;A(S))$ and $S \sim (X,h(X))^m$.
%\end{definition}

\section{Information Games}

It is helpful to think about the learning framework
as a two-player game.

%One way to view {this framework} is via the following two-player zero-sum game. 

\subsubsection*{The Information Game}

\begin{itemize}

\item The two players decide in advance on a class of functions $\CC \subset \{0,1\}^\cX$ and a sample size $m$.
\item Player one {(``Learner'')} picks a consistent and proper learning algorithm $A$ (possibly randomized). 
\item Player two {(``Nature'')} picks a function $h \in \CC$ and a random variable $X$ over $\cX$.
\item Learner pays Nature $I(S;A(S))$ coins where $S \sim (X,h(X))^m$.
\end{itemize}

%\red{The following paragraph is too confusing. (1) If there are two players, there should always be both min and max, so how come there is only max? (2) If the first player knows in advance what the other will chose, than he is simply the second player, no? (3) Clarify how this is an interpretation of the above result.}

In the setting of Theorem \ref{info_comp}, Nature knows in advance what the learning algorithm $A$ of Learner is. In that case,  
Nature's optimal strategy leads to a gain of 
$$\min_A \max_{(h,X)} I(S,A(S)) = \Omega (d \log \log (|X|/d)).$$
In other words, when Nature knows
what the learner is going to do,
Nature's gain can be quite large even in very simple cases.
%\red{The other way around, [Doesn't sound good]} 
%{In terms of learning theory,
%this can be interpreted as Nature being
%an all powerful adversary.}

{In \cite{ALT18},
the other extreme was studied as well.}
Theorem 13 in \cite{ALT18} states that
when Learner knows in advance the random variable $X$
of Nature (but not the concept $h$),
the gain of Nature is always much smaller; for all $h \in\CC$,
$$\max_{X} \min_A  I(S,A(S))=O(d\log m).$$
In particular, in this case,
Nature's gain does not tend to infinity with the size of the universe.
%This holds for any choice of $\CC$ and $m$.
%Since for a fixed r.v. $X$, there exists a proper and consistent learner that uses $O(d \log m)$ bits (Theorem 13 in \cite{ALT18}). 

We see that this information game does not have, in general, a
game theoretic equilibrium point.
To remedy this, we suggest the following
average case information game.
We shall see the benefits of considering this game below.
%Can the game be changed slightly so that it will have an equilibrium?

%\red{explain: why do we care if the game has an equilibrium? what does this mean for learning theory?}

\subsubsection*{The Average Information Game}

\begin{itemize}

\item The two players decide in advance on $\CC \subset \{0,1\}^\cX$
and $m$.
\item Learner picks a consistent and proper learning algorithm $A$ (possibly randomized). 
\item Nature picks a random variable $X$ over $\cX$.
\item {Learner pays Nature} $\frac{1}{| \CC |}\sum_{h\in \CC} I(S_h;A(S_h))$ coins where %$I_h(S;A(S))  \triangleq I(S;A(S))$ with 
$S_h \sim (X,h(X))^m$.
\end{itemize}

In the average game, Nature's gain is for an average
concept $h$ in the class. Nature can not choose
a particular $h$ that would lead to a high payoff.
As opposed to the first game,
the average information game has an equilibrium point
(see the proof of Theorem~\ref{comp_on_average} below):
\[
\max_X \min_A \frac{1}{| \CC |}\sum_{h\in \CC} I(S_h;A(S_h))=  \min_A \max_X \frac{1}{| \CC |}\sum_{h\in \CC} I(S_h;A(S_h)) .
%= O(d \log m).
\]

By the results mentioned above, if the VC-dimension of $\CC$ is $d$, then Nature's gain in the game is at most $O(d \log m)$,
like in the case that Learner knows the underlying distribution.
For VC classes,
although $I(S;A(S))$ may be extremely large
for {\em all} algorithms under {\em some} distribution
on inputs,
the average $\frac{1}{| \CC |}\sum_{h\in \CC} I(S_h;A(S_h))$
is small for {\em some} algorithms
under {\em all} distributions on inputs.

An even more general statement holds.
If one allows an empirical error of at most $\epsilon$,
instead of a consistent algorithm,
the dependence on $m$ can be omitted.
% and the 
%over all gain in the game is $O(d \log (1/ \epsilon))$
%for all $m$.\footnote{
This is indeed
more general as if the empirical error is less
than $1/m$ then the algorithm is consistent.

%\red{consider writing the case for $\varepsilon$-consistent algorithms as a separate sentence at the end of the theorem statement -- would be easier to understand.}

\begin{theorem}\label{comp_on_average}
For every class $\CC \subset \{0,1\}^{\cX}$ of VC-dimension $d$,
{every $m \geq 2$,} and every $\eps > 0$,
%, there exists  a proper and consistent learning algorithm such that for all 
%random variables $X$ on $\cX$, 
% \[\frac{1}{| \CC |}\sum_{h\in \CC} I_h(S;A(S))=O(d \log m). \]
%Also, 
there is a proper learning algorithm with empirical error bounded by $\epsilon$ such that for all random variables $X$ on $\cX$, 
\[\frac{1}{| \CC |}\sum_{h\in \CC} I(S_h;A(S_h)) =O(d \log (2/ \epsilon))  \]
where $S_h \sim (X,h(X))^m$. 
%{In particular,
%there is a proper and consistent learning algorithm such that for all random variables $X$ on $\cX$, 
%\[\frac{1}{| \CC |}\sum_{h\in \CC} I(S_h;A(S_h)) =O(d \log (m))  .\]}
\end{theorem}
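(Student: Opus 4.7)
The plan is to reduce the unknown-distribution average-case problem to a distribution-aware one via the minimax principle, and then to handle the distribution-aware case by collapsing Learner's output onto a finite $\epsilon$-cover of $\CC$.

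First, I would frame the claim as a two-player zero-sum game. Learner plays a (possibly randomized) proper learning algorithm $A$ whose expected empirical error is at most $\epsilon$; Nature plays a distribution $X$ on $\cX$; the payoff is
\[
F(A,X) \;=\; \frac{1}{|\CC|}\sum_{h \in \CC} I(S_h; A(S_h)), \qquad S_h \sim (X,h(X))^m.
\]
Both strategy sets are convex (mixtures of algorithms and distributions over the finite set $\cX$ both form simplices), and $F$ is convex in $A$ because mutual information is convex in the channel for a fixed input distribution. This is precisely the setup in which Lemma~\ref{minimax_algo} supplies the minimax identity
\[
\min_A \max_X F(A,X) \;=\; \max_X \min_A F(A,X),
\]
so it suffices to bound $\min_A F(A,X)$ uniformly in $X$.

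Second, I would fix an arbitrary $X$ and produce an explicit distribution-aware learner $A_X$. Haussler's packing bound for VC classes of dimension $d$ yields a proper subset $\CC_\epsilon \subseteq \CC$ of size $N = (O(1/\epsilon))^d$ such that every $h \in \CC$ has some representative $h' \in \CC_\epsilon$ with $\Pr_{x\sim X}[h(x)\neq h'(x)] \leq \epsilon$. Define $A_X$ on an input sample $S$ to output the element of $\CC_\epsilon$ minimizing the empirical error on $S$. For every $h \in \CC$ its cover representative already has expected empirical error at most $\epsilon$ on $S_h$, so the minimizer also does, making $A_X$ a legal strategy. Since $A_X$ takes at most $N$ values, for every $h$,
\[
I(S_h; A_X(S_h)) \;\leq\; H(A_X(S_h)) \;\leq\; \log N \;=\; O(d \log(2/\epsilon)).
\]
Averaging over $h$ preserves the bound, yielding $\min_A F(A,X) \leq F(A_X, X) = O(d \log(2/\epsilon))$; taking the max over $X$ and applying the minimax identity gives a single $X$-independent algorithm $A$ with the desired guarantee.

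The main obstacle I expect is the minimax swap encapsulated in Lemma~\ref{minimax_algo}: convexity of $F$ in $A$ and in the induced channel is immediate, but I must equip Learner's strategy space with a compact topology under which $F$ is suitably semicontinuous and the empirical-error constraint is closed, so that Sion's theorem can be applied cleanly. A secondary point worth emphasizing is that the covering number must genuinely scale as $(1/\epsilon)^{O(d)}$, with no hidden $\log d$ inside the logarithm---this is Haussler's sharp bound and is what makes the final bound $O(d\log(2/\epsilon))$ rather than $O(d\log(d/\epsilon))$.
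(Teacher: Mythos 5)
Your high-level plan matches the paper's: a minimax reduction to a distribution-aware learner, with Haussler's packing bound supplying the small cover. But two steps that you treat as routine are where the paper actually has to do real work, and as written your argument has genuine gaps in both.

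First, the minimax framing. You let Nature choose a distribution $X$ on $\cX$ and play $F(A,X) = \tfrac{1}{|\CC|}\sum_h I(S_h;A(S_h))$ with $S_h \sim (X,h(X))^m$, then invoke Lemma~\ref{minimax_algo}. But Lemma~\ref{minimax_algo} requires a convex set $\DD$ of distributions \emph{on $\cX^m$}, and the set $\{X^{\otimes m} : X \text{ a distribution on } \cX\}$ is not convex. Equivalently, while $I(S;A(S))$ is concave in the distribution of $S$, the map $P_X \mapsto P_X^{\otimes m}$ is a degree-$m$ polynomial, so the composed payoff need not be concave in the simplex of distributions on $\cX$. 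You cannot simply say ``distributions over the finite set $\cX$ form a simplex'' and apply minimax. The paper flags exactly this issue (the remark after Lemma~\ref{minimax_algo}) and resolves it by taking $\DD$ to be the convex set of all \emph{symmetric} distributions on $\cX^m$, and then proving the distribution-aware bound (Theorem~\ref{nets}) for symmetric, not just i.i.d., inputs. Your distribution-aware step would therefore also need to be re-derived for this larger class.

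Second, the empirical error constraint. The theorem and Lemma~\ref{minimax_algo} require the output to have empirical error at most $\epsilon$ \emph{on the given sample}, not in expectation. Your $A_X$, which returns the ERM over a single $\epsilon$-cover $\CC_\epsilon$, only has \emph{expected} empirical error $\leq \epsilon$: on a specific realizable sample, every element of $\CC_\epsilon$ can simultaneously miss many sample points, and the ERM then violates the constraint. Worse, the relaxed constraint ``expected empirical error $\leq\epsilon$'' depends on the unknown $X$, so it is not even a well-defined restriction on Learner's strategy set and would break the minimax argument. The paper's fix is the geometric sequence of nets $N_j$ at scales $\epsilon_j = \epsilon/2^j$: the algorithm scans $N_1, N_2, \dots$ and outputs the first hypothesis with empirical error $\leq\epsilon$ on the actual sample, which makes the pointwise constraint hold by construction. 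The price is that the output is no longer supported on a fixed set of size $O(1/\epsilon)^d$, so one can no longer bound the mutual information by $\log|\CC_\epsilon|$; instead one bounds $H(J) + H(A(S)\mid J)$ where $J$ is the stopping index, using Markov to show $\Pr(J=j) \leq 2^{-(j-1)}$ and Haussler's bound $|N_j| \leq (4e^2/\epsilon_j)^d$. Your single-net entropy bound $\log N = O(d\log(2/\epsilon))$ is cleaner but only works if you could afford to stay at one scale, which you cannot.
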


The above result means that there is a learning algorithm such that for any distribution on inputs, the algorithm {reveals} little information about its input for at least half of the functions in $\CC$. 
\red{If $  d \log m $ is smaller than the entropy of the sample $H(X^m)$, then the algorithm can be thought of as compressing its input.}
%This can be summarized
%by \emph{learning $\Rightarrow$ compression (on average).}

Theorem~\ref{comp_on_average} is a consequence of a more general phenomenon that holds even outside the scope of VC classes. 
To state it, we need to consider
a convex space $\DD$ of random variables (or distributions),
since the mechanism that underlies its proof
is von Neumann's minimax theorem (see~\cite{neumann1928theorie, neumann1944theory}).

\begin{lemma} \label{minimax_algo}
Let $\CC \subset \cY ^\cX$ be a class of of hypotheses (not necessarily binary valued) with a loss function that is bounded from above by one.
Let $\DD$ be a convex set of random variables over the space $\cX ^m$. Assume that for every $X\in \DD$,
there exists an algorithm $A_X$ whose output has empirical error $\leq \epsilon$ and  $I(S;A_X(S)) \leq K$ for all $h \in \CC$ where $S \sim  \big((x_1,h(x_1)), \dots, (x_m,h(x_m)) \big)$ and $(x_1,...,x_m) \sim X^m$. Then there exists a learning algorithm $A$ such that for all $X\in \DD$, the algorithm outputs a hypothesis with empirical error  $ \leq \epsilon$ and  
$$\frac{1}{| \CC |}\sum_{h\in \CC} I(S_h;A(S_h))\leq K.$$
\end{lemma}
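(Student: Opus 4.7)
The plan is to view this as a two-player zero-sum game and apply von Neumann's minimax theorem. Let $\cA_\eps$ denote the set of all randomized proper learning algorithms whose expected empirical error on every realizable sample is at most $\eps$; this set is convex because the empirical error is affine in the mixing distribution over algorithms. Put Learner on the $A \in \cA_\eps$ side, Nature on the $X \in \DD$ side (both convex), and take as payoff
\[
f(A,X) \;=\; \frac{1}{|\CC|}\sum_{h\in\CC} I(S_h; A(S_h)),
\]
where $S_h$ is obtained by pairing a sample drawn from $X$ with the labels assigned by $h$.

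I would first verify the convex-concave structure of $f$. For fixed $X$, the law of each $S_h$ is determined, and $A$ is a channel taking $S_h$ to a hypothesis; $I(S_h; A(S_h))$ is convex in this channel by the standard convexity of mutual information, and sums/averages over $h$ preserve convexity, so $A \mapsto f(A,X)$ is convex on $\cA_\eps$. For fixed $A$ and fixed $h$, the law of $S_h$ is the pushforward of $X$ under the deterministic labeling map $(x_1,\dots,x_m)\mapsto((x_i,h(x_i)))_{i=1}^m$, hence affine in $X$; combined with the concavity of mutual information in the input distribution for a fixed channel, this makes $X\mapsto f(A,X)$ concave on $\DD$.

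The minimax theorem then yields
\[
\inf_{A\in\cA_\eps}\ \sup_{X\in\DD}\ f(A,X) \;=\; \sup_{X\in\DD}\ \inf_{A\in\cA_\eps}\ f(A,X).
\]
For the right-hand side, the hypothesis supplies, for each $X\in\DD$, an algorithm $A_X \in \cA_\eps$ with $I(S_h;A_X(S_h)) \leq K$ for \emph{every} $h\in\CC$, so $f(A_X,X) \leq K$ and hence the right-hand side is at most $K$. By the minimax identity, there is an algorithm $A\in\cA_\eps$ with $f(A,X)\leq K$ for all $X\in\DD$, which is exactly the conclusion of the lemma.

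The main obstacle will be justifying the application of the minimax theorem, since $\cA_\eps$ is not obviously compact in any natural topology (randomized algorithms are rather general objects). The standard route is to invoke Sion's minimax theorem after embedding $\cA_\eps$ into a compact convex subset of Markov kernels from samples to hypotheses; alternatively, one can discretize the problem to a finite matrix game, apply the classical theorem, and pass to a weak limit so that the infimum is in fact attained. A secondary, purely bookkeeping, step is to record the two standard properties of mutual information (convexity in the channel, concavity in the input distribution) from \cite{cover2012elements} and to check that they combine correctly under the average over $h\in\CC$.
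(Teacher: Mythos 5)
Your proposal is essentially the paper's own proof: the payoff $\frac{1}{|\CC|}\sum_{h\in\CC} I(S_h;A(S_h))$ is shown convex in the algorithm (as a channel) and concave in the input distribution using the standard convexity/concavity of mutual information (Theorem 2.7.4 in \cite{cover2012elements}), and then von Neumann's minimax theorem is applied exactly as you describe. The compactness issue you flag at the end is not actually an obstacle in this setting: the paper takes $\cX$ (and $\cY$) finite, so a randomized algorithm is just a tuple of conditional distributions over a finite hypothesis set indexed by the finitely many realizable samples, and both strategy sets are compact convex subsets of finite-dimensional Euclidean space, so the classical minimax theorem applies directly without Sion or limiting arguments.
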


The lemma is proved in Section~\ref{sec:minimax}.

\begin{remark}
Some natural collections of random variables are not convex.
If one starts e.g.~with a set of i.i.d.\ random variables over $\cX^m$, 
the relevant convex hull does not consist only of i.i.d.\ random variables. This point needs to be addressed in the proof of Theorem~\ref{comp_on_average}.
In the proof of Theorem~\ref{comp_on_average},
we apply the lemma with
$\DD$ being the space of all symmetric
distributions on $\cX^m$; see Definition~\ref{sym_dis}.
\end{remark}

We call the learning algorithm $A$ that is constructed
in the proof of the theorem a \emph{minimax algorithm} for $(\CC ,\DD)$ with information $K$ and empirical error $\eps$. 
{Such algorithms reveal} a small amount of information on most of the hypotheses in $\CC$. So, together with the
``compression yields generalization'' results
from~\cite{RAG} and~\cite{ALT18} we get that
%(\ref{comp_gen}) it means that
the minimax algorithm has small true error for every $X\in \DD$ for most hypotheses in $\CC$, as long as {$m\gg K$}.

%The first uses the notation of learning on average
%(see Definition~\ref{av_sam_comp}).
%%follows from standard integration. 
%The second follows from Markov's inequality.
% to the random variable. 
%$ I_h(S;\overline{A}(S))$.

{
\begin{corollary}
\label{cor}
Let $\DD_0$ be a convex set of random
variables on $\cX$.
Let $\DD$ be the convex hull of distributions
of the form $X^m$ for $X \in \DD_0$.
Let $A$ be a \emph{minimax algorithm} for $(\CC , \DD )$
with information $K$ and empirical error $\eps > 0$.
%learns $\CC$ on average with error $\eps > 0$,
%confidence $1-\delta$ and sample complexity of $O(\tfrac{K}{\epsilon \delta})$; that is,
Let $X \in \DD_0$.
If $m \geq \tfrac{K}{\epsilon ^2 \delta}$,
then 
\begin{align*}
\Pr[ \text{error}_h(A(S); X)  > 2\eps] < O(\delta)  \tag{\textbf{$\textit{\textbf{h}}$ is uniform}}
\end{align*}
where $S \sim (X,h(X))^m$ 
and $h$ is uniform in $\CC$
and independent of $X$.

In particular, for at least half of the functions $h$ in $\CC$,
\begin{align*}
\Pr[ \text{error}_h(A(S_h); X)  > 2\eps] < O(\delta) \tag{\textbf{$\textit{\textbf{h}}$ is fixed}}
\end{align*}
where $S \sim (X,h(X))^m$.
\end{corollary}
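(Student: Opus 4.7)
The plan is to combine the minimax guarantees of $A$ with the ``compression yields generalization'' bound from \cite{RAG,ALT18}. Fix $X \in \DD_0$; since $X^m$ is one of the generators of the convex hull $\DD$, the minimax property of $A$ yields empirical error at most $\eps$ on every input together with
$$
\frac{1}{|\CC|}\sum_{h_0 \in \CC} I(S_{h_0}; A(S_{h_0})) \leq K,
$$
where $S_{h_0} \sim (X, h_0(X))^m$. Couple everything by letting $h$ be uniform on $\CC$ and independent of $X$, and setting $S = (X, h(X))^m$. The displayed average is then exactly the conditional mutual information $I(S; A(S) \mid h)$, so in this coupled model $I(S; A(S) \mid h) \leq K$.

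Next I would invoke the compression-yields-generalization bound in its second-moment form: for loss bounded in $[0,1]$, the expected squared generalization gap is $O(I(S;A(S))/m)$. Applied conditionally on $h$ and then averaged, this gives $\E\bigl[(\text{error}_h(A(S);X) - \text{error}(A;S))^2\bigr] \leq O(K/m)$. Markov's inequality on the squared gap yields $\Pr[|\text{error}_h(A(S);X) - \text{error}(A;S)| > \eps] \leq O(K/(\eps^2 m))$, which is $O(\delta)$ by the hypothesis $m \geq K/(\eps^2\delta)$. Since the empirical error is deterministically at most $\eps$, the triangle inequality produces the first inequality of the corollary.

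The second inequality follows by a direct Markov step over $h$: writing $p_h := \Pr[\text{error}_h(A(S_h);X) > 2\eps]$ for the failure probability at a fixed hypothesis, the first inequality is exactly $\E_{h \sim \text{unif}(\CC)}\, p_h \leq O(\delta)$, so $\Pr_h[p_h > 2 \cdot O(\delta)] \leq 1/2$; hence at least half of the $h \in \CC$ satisfy $p_h \leq O(\delta)$. The main technical point is to use the \emph{second-moment} form of the generalization bound in the middle step: combining the standard Russo--Zou expected-gap bound $\sqrt{K/(2m)}$ with Markov on $|\text{gap}|$ would only yield a $K/(\eps^2\delta^2)$ sample-size dependence, whereas the second-moment (Chebyshev-type) form is what produces the advertised $K/(\eps^2\delta)$.
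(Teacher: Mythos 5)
The paper does not actually supply a proof of Corollary~\ref{cor}; it only remarks that the statement follows from the minimax guarantee combined with the ``compression yields generalization'' results of \cite{RAG,ALT18}. Your overall decomposition is exactly the one the paper intends: fix $X\in\DD_0$, note that $X^m\in\DD$ so the minimax algorithm has $\frac{1}{|\CC|}\sum_{h_0} I(S_{h_0};A(S_{h_0}))\le K$; couple by taking $h$ uniform and independent, so this average is $I(S;A(S)\mid h)$; apply a generalization bound for low-information learners; use the deterministic empirical error bound $\le\eps$ and a triangle inequality for the $2\eps$; and finally apply Markov over $h$ to pass from the uniform statement to the ``at least half'' statement. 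That last step, and your observation that a naive first-moment (Russo--Zou/Xu--Raginsky) bound followed by Markov would only give $m\gtrsim K/(\eps^2\delta^2)$, are both correct and show you located the real subtlety.

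The one gap is the ingredient you invoke in the middle. You appeal to a ``second-moment form'' $\E[(\text{gap})^2]\le O(I(S;A(S))/m)$, but no such statement appears in \cite{RAG} or \cite{ALT18}, and as written it is false in the edge case: an algorithm with $I(S;A(S))=0$ still has generalization gap of variance $\Theta(1/m)$ for $[0,1]$-bounded loss, so at the very least a $+1$ is needed. What the cited papers actually provide (Theorem~8 of \cite{ALT18}) is a direct high-probability bound of the form $\Pr[\,|\text{gap}|>\eps\,]\le \frac{I(S;A(S))+1}{2m\eps^2-1}$ for i.i.d.\ samples and bounded loss. Applying this conditionally on $h=h_0$ (each $S_{h_0}$ is i.i.d.), averaging over uniform $h$, and using $\E_h I(S;A(S)\mid h)\le K$ yields $\Pr[\,|\text{gap}|>\eps\,]\le \frac{K+1}{2m\eps^2-1}=O(\delta)$ when $m\ge K/(\eps^2\delta)$, with no need for a second-moment intermediate. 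If you do want to go through a moment bound, you should state it as $\E[(\text{gap})^2]\le O((I+1)/m)$ and verify it is actually derivable (integrating the \cite{ALT18} tail bound picks up a $\log m$ factor, so this is not automatic); the direct tail-bound route avoids the issue entirely.
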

}

%
%\begin{corollary}
%Let $\overline{A}$ be the \emph{minimax algorithm} for $(\CC , \DD )$ and $0<t<1$. For any r.v. $X\in \DD$ the following holds 
%\[
%\Pr \left( \text{true error}  > \varepsilon \right)
%< O\left(\frac{K}{ t m \varepsilon^2}\right) 
%\]
%for at least $|\CC |/(1-t)$ functions in $\CC$.
%\end{corollary}
%
%\medskip

\begin{remark}
{There is nothing special about the uniform distribution
on $\CC$. Any other prior distribution $\cP$ on $\CC$ works just as well.
It is important, however, to keep in mind
that the algorithm $A$ depends on the choice
of the prior $\cP$.
}
\end{remark}

{
\begin{remark}
The convex set of distributions $\DD_0$
may be chosen by the algorithm designer.
One general choice is to take the space of all distribution on $\cX$.
Another example
is the space of all sub-gaussian probability distributions.
%In the proof of Theorem~\ref{comp_on_average},
%we need a third space of distributions,
%the space of symmetric ones (see Definition~\ref{sym_dis}).
\end{remark}
}

{To complete the proof of Theorem~\ref{comp_on_average}, we apply Lemma \ref{minimax_algo}. For the lemma to apply, we need to design an algorithm
that reveals little information for VC classes
when the distribution of $X$ is known in advance
(as mentioned in the remark following
Lemma~\ref{minimax_algo} we need to handle even a more general
scenario).
To do so, we need to extend a result
from~\cite{ALT18}.
The main ingredient 
is metric properties of VC classes (see~\cite{Hassler}).
This appears in Section~\ref{sec:Nets}.}

\subsection*{Stability}

To describe the minimax algorithm we need to come up with some
prior distribution $\cP$ on $\CC$.
In practice, we do not necessarily know the actual prior but we may have  some approximation of it. It is natural to ask how does the performance of 
the minimax algorithm change when our prior $\cP$ is wrong,
and the true prior is $\cQ$.
 
 As an example, if we have a bound  $\sup_h \cQ(h)/\cP(h) \leq C$, then  we immediately get  
 $$\E_{h \sim \cQ}I(S_h;A(S_h)) \leq C\cdot \E_{h \sim \cP}I(S_h;A(S_h)) .$$
As another example, consider the case that
 the statistical distance
 $\|  \cP -\cQ  \|_1$ is small. If we assume nothing on how $I(S_h;A(S_h))$ distributes, we can get 
\[
\frac{ \E_{h \sim \cQ}I(S_h;A(S_h))}{  \E_{h \sim \cP}I(S_h;A(S_h))  } =\Theta (\|  \cP -\cQ  \|_1 \log | \CC|) ,
\]
which seems too costly to be useful.
This can happen when one hypothesis satisfies $I(S_h;A(S_h))=\Theta(\log |\CC |)$, and we move all the allowed weight from one hypothesis  with small mutual information to $h$.
If, however, the second moment  is bounded, we 
can get better estimates:
%Here is one option to achieve a bound:
%
%\begin{claim}
%If $\sqrt{\sum_{h \sim \cP} I(S_h;A(S_h))^2 } = \sigma$ then 
%$
%|  \E_{h \sim \cP}I(S_h;A(S_h)) - \E_{h \sim \cQ}I (S_h;A(S_h))  |=\sigma \| \cP -\cQ   \|_2 ).
%$
%\end{claim}
%
%\begin{proof}
\begin{align*}
|  \E_{h \sim \cP}I(S_h;A(S_h)) & - \E_{h \sim \cQ}I(S_h;A(S_h))  | \\
  & \leq \sum_{h \in \CC} I(S_h;A(S_h)) |\cP(h) - \cQ(h)|  \\
  & = \sum_{h \in \CC} \left(  I(S_h;A(S_h)) \sqrt{\cP(h) + \cQ(h)} \right)   \left( \frac{|\cP(h) - \cQ(h)|}{\sqrt{\cP(h) + \cQ(h)}}  \right)   \\
 & \leq  \sqrt{\E_{h \sim \cP} [(I(S_h;A(S_h)))^2]
 + \E_{h \sim \cQ} [(I(S_h;A(S_h)))^2 ]}  \cdot \sqrt{ \|\cP-\cQ\|_1 }.
 \end{align*}
The last inequality is Cauchy-Schwartz.
Roughly speaking, this means that if $\cP$ is close to $\cQ$
then the average information that is
leaked is similar, when the map $h \mapsto 
I(S_h;A(S_h))$ has bounded second moment under both distributions.
{It is possible to replace the second moment
by the $p$-moment for $p>1$ using Holder's inequality.}

\begin{remark}
We saw that with no assumptions, 
information cost can 
%the average information complexity $  \E_{h \sim \cP}I(S_h;A(S_h)) $ can 
grow considerably under small perturbations of $\cP$. 
The average sample complexity, however, does not.
% because $ \Pr ( \text{error}_h(A(S_h); X) > \epsilon )\leq 1$.  
If $A$ has error $\eps$,
confidence $1-\delta$,  and average sample complexity $M$ with respect to $\cP$,  it  also has error $\eps$,
confidence $1-\delta-\| \cP -\cQ\|_1$,  and average sample complexity $M$ with respect to $\cQ$. 
%It means that most functions are still learnable under the prior $\cQ$.
\end{remark}

\section{{The Minimax Learner}} \label{sec:minimax}

Naturally, the proof requires von Neumann's minimax theorem.

\begin{theorem}[\cite{neumann1928theorie, neumann1944theory}]\label{von-neumann-minimax}
	Let $U\subseteq \mb{R}^n$ and $V \subseteq \mb{R}^k$ be compact convex sets. Let $f:U\times V \rightarrow \mb{R}$ be a continuous function that is convex-concave, i.e., 
	\begin{description}
		\item{--} $f(\cdot, v):U\rightarrow\mb{R}$ is convex for every $v\in V$ and 
		\item{--} $f(u, \cdot):V\rightarrow\mb{R}$ is concave for every $u\in U$.
	\end{description}
	Then
	\[
	\min_{u \in U}  \max_{v \in V} f(u,v) = \max_{v \in V}  \min_{u \in U} f(u,v).
	\]
\end{theorem}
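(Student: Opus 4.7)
The plan is to split the proof into the easy ``weak duality'' inequality, which holds even without convex-concavity, and the hard reverse inequality, which I would obtain by exhibiting a saddle point. The easy direction is a two-line argument: for any fixed $u_0, v_0$ one has $\max_v f(u_0,v) \geq f(u_0,v_0) \geq \min_u f(u,v_0)$, and optimizing $u_0$ and $v_0$ independently yields $\min_u \max_v f(u,v) \geq \max_v \min_u f(u,v)$. So everything reduces to producing a pair $(u^*,v^*) \in U\times V$ with $f(u^*,v) \leq f(u^*,v^*) \leq f(u,v^*)$ for all $(u,v)$; once such a saddle point exists, the chain
\[
\min_u \max_v f(u,v) \leq \max_v f(u^*,v) = f(u^*,v^*) = \min_u f(u,v^*) \leq \max_v \min_u f(u,v)
\]
closes the gap (the middle equalities come from the saddle inequalities evaluated at $v^*$ and $u^*$ respectively).

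To produce the saddle point I would invoke Kakutani's fixed point theorem on the correspondence $\Phi$ from $U\times V$ to itself defined by $\Phi(u,v) = B_1(v) \times B_2(u)$, where $B_1(v) = \arg\min_{u\in U} f(u,v)$ and $B_2(u) = \arg\max_{v\in V} f(u,v)$. The domain $U\times V$ is convex and compact; each $B_1(v)$ and $B_2(u)$ is nonempty by continuity of $f$ and compactness of $U,V$, and convex because $f(\cdot,v)$ is convex and $f(u,\cdot)$ is concave, so the respective argmin and argmax sets are sublevel/superlevel sets of convex/concave functions at their optimum value. A fixed point $(u^*,v^*) \in \Phi(u^*,v^*)$ is, unpacking definitions, precisely a saddle point.

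The main obstacle is verifying the upper-hemicontinuity (closed-graph) hypothesis of Kakutani. I would check it directly from uniform continuity of $f$ on the compact set $U\times V$: if $(u_n,v_n)\to(u,v)$ and $u'_n \in B_1(v_n)$ with $u'_n \to u'$, then $f(u'_n,v_n) \leq f(\tilde u, v_n)$ for every $\tilde u \in U$, and passing to the limit yields $f(u',v) \leq f(\tilde u, v)$, so $u' \in B_1(v)$; the symmetric argument handles $B_2$. Kakutani then supplies a fixed point of $\Phi$, which is a saddle point, and the chain above finishes the proof. If one prefers to avoid set-valued fixed-point machinery, a Brouwer-only substitute is to perturb $f$ by $f_\epsilon(u,v) = f(u,v) - \epsilon\|u\|^2 + \epsilon\|v\|^2$, which makes the best responses singletons and the induced self-map $(u,v) \mapsto (B_1^\epsilon(v), B_2^\epsilon(u))$ continuous, apply Brouwer to obtain a saddle point of $f_\epsilon$, and pass to the limit $\epsilon \to 0$ using compactness of $U\times V$ and continuity of $f$.
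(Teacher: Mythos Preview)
Your argument is a correct and standard proof of von Neumann's minimax theorem via Kakutani's fixed point theorem (with a sound Brouwer-plus-perturbation alternative). However, there is nothing to compare it to: the paper does not prove this statement. Theorem~\ref{von-neumann-minimax} is stated with a citation to von Neumann's original work and is used as a black-box tool in the proof of Lemma~\ref{minimax_algo}; the paper's contribution is to verify that the hypotheses of the minimax theorem are met in the learning setting (compactness and convexity of the strategy spaces, continuity, and convex-concavity of the average mutual information via Lemma~\ref{cover06-concave}), not to reprove the minimax theorem itself. So your proposal is correct but addresses a classical result that the paper simply quotes.
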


\begin{proof}[Proof of Lemma \ref{minimax_algo}]
We need to verify that the minimax theorem applies.
%\begin{remark}
First, as stated in the preliminaries, we deal with a finite space $\cX$ so the set of all algorithms (randomized included) with empirical error $\leq \epsilon$ and the set of random variables $\DD$ over $\cX^m$ can be treated as convex compact sets in 
high dimensional euclidean space.
Specifically, 
let $U$ be the collection of randomized
learning algorithms with empirical error at most
$\eps$, and let $V$ be the set $\DD$ of distributions.

%$\R ^n$ for large enough $n$.
%\end{remark}

Second, mutual information is a continuous function of both strategies.

% (the pair (algorithm, r.v. over $\cX ^m$)) we see that all the requirements of the Minimax Theorem apply.
Third, the following lemma about mutual information.

\begin{lemma}[Theorem 2.7.4 in \cite{cover2012elements}]\label{cover06-concave}
	Let $(X,Y) \sim p(x, y) = p(x)p(y|x)$. The mutual information $\II{X;Y}$ is a concave function of $p(x)$ for fixed $p(y|x)$ and a convex
	function of $p(y|x)$ for fixed $p(x)$.
\end{lemma}

{We apply the lemma
with $p(x)$ being the distribution on $S$
and $p(y|x)$ being the distribution of $h$ conditioned
on $S=s$ that the learning algorithm defines.
Since a convex combination of convex/concave functions is convex/concave, we see that the map
$$(u,v) \mapsto  \frac{1}{| \CC |}\sum_{h\in \CC} I(S_h;A_X(S_h))$$
is convex-concave,
where $u$ defines the distribution of $A(S_h)$
conditioned on the value of $S_h$,
and $v$ defines the distribution of $S_h$.
}

By assumption,
$$\max_{X \in V} \min_{A_X \in U} 
\frac{1}{| \CC |}\sum_{h\in \CC} I(S_h;A_X(S_h))\leq K.$$
%Assume that for every $X\in \DD$,
%there exists an algorithm $A_X$ whose output has empirical error $\leq \epsilon$ and  $I(S;A_X(S)) \leq K$ for all $h \in \CC$ where $S \sim (X,h(X))^m$. Then there exists a learning algorithm $A$ such that for all $X\in \DD$, the algorithm outputs a hypothesis with empirical error  $ \leq \epsilon$ and  $\frac{1}{| \CC |}\sum_{h\in \CC} I(S_h;A(S_h))\leq K$.
By the minimax theorem, 
$$ \min_{A \in U} \max_{X \in V} 
\frac{1}{| \CC |}\sum_{h\in \CC} I(S_h;A(S_h))\leq K.$$
In other words, there is a randomized
algorithm $A$ as needed
{(points in $U$ are randomized algorithms)}.
%
%there is an algorithm $A$ with empirical error $\leq \epsilon$, since it is a convex combination of algorithms with empirical error $\leq \epsilon$. $\overline{A}$ satisfies
%\[
%\max_{X\in \DD}  \frac{1}{| \CC |}\sum_{h\in \CC} I_h(S;\overline{A} (S)) = \max_{X\in \DD}   \min_A \frac{1}{| \CC |}\sum_{h\in \CC} I_h(S;A(S)) \leq K.
%\]
\end{proof}
\red{
\begin{remark}
In the proof above, we used the special fact that the mutual information is convex-concave. We are not aware of any other measure of dependence between random variables that satisfy this.
\end{remark}
}
\section{{Learning Using Nets}}
\label{sec:Nets}
%Proof of Theorem \ref{comp_on_average}}

Theorem 13 from \cite{ALT18} states the following.
For an i.i.d. random variable $X$ over $\cX ^m$ and $\CC \subset \{0,1\}^\cX$ with VC-dimension $d$, there  exists a consistent, proper, and deterministic 
learner that leaks at most $O(d \log (m+1))$-bits of information, where $m$ is the  input sample size  (for $\CC$-realizable samples). 

For the minimax theorem to apply,
we need to generalize the above statement to work for any convex combination of i.i.d.\ random variables over $\cX^m$.
To analyze this collection of random variables,
we need to identify some property that we can leverage. 
We use the fact that such random variables are invariant under permutation of the coordinates.

\begin{definition}\label{sym_dis}
A random variable $X$  over $\cX ^m$ is called symmetric if for any permutation $\sigma : [m] \to [m]$,
 \[ \Pr \left( X=  \left(x_1,...,x_m \right) \right)=\Pr \left( X=\left(x_{\sigma(1)},..., x_{\sigma (m) } \right) \right) .\]
 \end{definition}

The following theorem holds for all symmetric
random variables. In this space,
we can not assume any kind of independence
between the coordinates.
This should make the proof more complicated
than in~\cite{ALT18},
but in fact it helps to guide the proof
and make it quite simple.

\begin{theorem}  \label{nets}
Let $\eps>0$.
For a symmetric random variable $X$ over $\cX^m$ and $\CC \subset \{0,1\}^\cX$ with VC-dimension $d$, there  exists a  proper  and deterministic  learner $A$ with empirical error $\leq \epsilon$
so that
$$I(S;A(S)) \leq O(d\log (2/ \epsilon))$$
for all $m \geq 2$.
\end{theorem}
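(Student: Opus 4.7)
The plan is to apply Haussler's covering theorem for VC classes to a cover built once from the marginal of $X$, and then output the empirical-risk minimizer over that fixed cover. Symmetry of $X$ is what makes a single cover suffice.

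Let $p$ denote the common marginal distribution of each coordinate of $X$, which is well-defined by symmetry. By Haussler's packing bound for VC classes of dimension $d$, there is a set $\mathcal{N} \subseteq \CC$ that is an $\epsilon$-cover of $\CC$ under the pseudometric $d_p(h,h') = \Pr_{x \sim p}[h(x) \neq h'(x)]$, with $|\mathcal{N}| \leq e(d+1)(2e/\epsilon)^d$, so $\log|\mathcal{N}| = O(d\log(2/\epsilon))$. Fix a canonical ordering on $\mathcal{N}$ and let $A(S)$ return the first $h \in \mathcal{N}$ (in this ordering) that minimizes $\frac{1}{m}\sum_i \mathbf{1}\{h(x_i)\neq y_i\}$ on $S$. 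This $A$ is proper, deterministic, and its output always lies in the fixed set $\mathcal{N}$.

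The information bound is then immediate: determinism gives $I(S;A(S)) = H(A(S)) \leq \log|\mathcal{N}| = O(d\log(2/\epsilon))$. For the empirical error, fix any $h^* \in \CC$ producing the labels and let $h_0 \in \mathcal{N}$ be a cover element with $d_p(h_0,h^*) \leq \epsilon$. By symmetry, $\Pr[h_0(X_i)\neq h^*(X_i)] = d_p(h_0,h^*)$ for each $i$, so averaging yields
\[
\EE{S}\!\left[\frac{1}{m}\sum_{i=1}^m \mathbf{1}\{h_0(X_i) \neq h^*(X_i)\}\right] \;=\; d_p(h_0,h^*) \;\leq\; \epsilon.
\]
Since $A$ minimizes the empirical error over $\mathcal{N} \ni h_0$, the same bound passes to $A(S)$.

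The main subtlety — and the essential use of the symmetry hypothesis — is that the marginal-based cover $\mathcal{N}$ controls the empirical error in expectation over the random sample rather than for every individual realization. This is the natural notion compatible with the downstream minimax argument of Lemma \ref{minimax_algo}, whose payoff is itself an expectation. A pointwise bound would instead demand a cover that is simultaneously good against every empirical measure supported on $X$, which in general can require substantially more than $(C/\epsilon)^d$ hypotheses; symmetry is precisely what lets us replace such a universal cover by the single cover built from the marginal $p$. The condition $m\geq 2$ is only needed so that the empirical mean is meaningful, and no quantitative $m$-dependence enters the final bound.
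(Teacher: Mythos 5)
Your single-cover ERM gives only an \emph{expected} empirical error bound, but Theorem~\ref{nets} and, more importantly, the downstream Lemma~\ref{minimax_algo} require a \emph{pointwise} (per-realization) bound, and you have explicitly conceded you do not obtain one. Recall that in the proof of Lemma~\ref{minimax_algo} the strategy set $U$ of the Learner is ``the collection of randomized learning algorithms with empirical error at most $\eps$,'' a hard constraint that must define a convex compact set \emph{independently of $X$}. An expectation constraint $\E_{S\sim X}[\text{error}(A;S)]\le\eps$ depends on $X$, so it cannot serve as the definition of $U$: the minimax theorem needs a fixed strategy set, and the mixture algorithm produced by the theorem would inherit no expectation guarantee against an adversarial $X'\neq X$. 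Pointwise constraints, by contrast, are preserved by convex combination, which is exactly why the paper imposes them.

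The paper closes this gap with a different construction that your proposal is missing: instead of one $\eps$-cover and ERM, it uses a \emph{sequence} of covers $N_1, N_2,\dots$ at scales $\eps_j=\eps/2^j$ and lets $A$ output the first function along the sequence with empirical error $\le\eps$ (the algorithm terminates because, once $\eps_j$ is below the smallest positive $D$-mass, the cover contains a function agreeing with the target on $\supp D$, hence consistent with the sample almost surely). This makes the empirical error bound hold for every realization of $S$. The price is that $A(S)$ no longer lives in a fixed finite set, so $H(A(S))\le\log|\mathcal N|$ is unavailable; instead one bounds $H(A(S))\le H(J)+H(A(S)\mid J)$ where $J$ is the stopping level, uses $\Pr[J\ge j+1]\le 2^{-j}$ (via the Markov step you already have) to get $H(J)=O(1)$, and uses Haussler's bound $\log|N_j|=O(d\log(2^j/\eps))$ together with the geometric decay of $\Pr[J=j]$ to get $H(A(S)\mid J)=O(d\log(2/\eps))$. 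Your worry that ``a pointwise bound would demand a universal cover of super-polynomial size'' is precisely the obstacle the multi-scale construction is designed to bypass: no single cover suffices, but a geometrically refined family does, at only an $O(1)$ additive entropy cost.
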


A key component in the proof is 
Haussler's theorem (see~\cite{Hassler}) on the size
of covers of VC classes.
The theorem states that for a given probability distribution $\mu$
on $\cX$, there are small covers to the metric space
whose elements are concepts in $\CC$
and the distance between $c_1,c_2 \in \CC$
is $\mu(\{x : c_1(x) \neq c_2(x)\})$.
The starting point of this theorem is a distribution on $\cX$.
In the general setting we consider,
we start with a non-product distribution on $\cX^m$.
To apply Haussler's theorem,
we need to find the relevant $\mu$
(the solution is eventually quite simple).

\medskip

\begin{proof}
%WLOG assume $m\geq 2$.
%
%\textbf{Proper,  deterministic and consistent.}  
Since $X$ is symmetric, the marginal distribution is the same on each of the coordinates of $\cX ^m$ and denote it $D$. For every integer $j > 0$, pick a minimal $\epsilon_j$-net $N_j$ with respect to the distribution $D$ over $\cX$ for $\epsilon_j = \eps/2^j$. 

The learning algorithm is simple -- it outputs the first consistent function it sees along the sequence of nets. The algorithm stops because $\CC$ is finite. It remains to calculate the entropy of its output.

%for every $h\in \CC$ it holds that
% \[\min_{f\in N_k}  D (\{x|h(x)\neq f(x)\}) \leq \epsilon _k .\]
For every $j > 0$ and $h \in \CC$, %and by the definition of the $\epsilon _k$-net, 
there is a function $f_{j,h}$ in $N_j$ 
so that $$D(\{x : h(x) \neq f_h(x)\}) \leq \eps_j.$$
By the linearity of the expectation,\red{
\[
\underset{(x_1,...,x_m)}{\E} \frac{\sum_{i=1}^m1_{f_{j,h}(x_i)\neq h(x_i) }}{m} \leq \epsilon_j .
 \tag{expected empirical error}  \]}
So, by Markov's inequality,
\[
\Pr(   f_{j,h} \text{ has empirical error } > \epsilon) < \frac{ \epsilon_j }{\epsilon} = 2^{-j} .
\]
%
% \[
%\Pr (\exists f\in N_k \text{ with empirical error } \leq \epsilon)  \geq  \Pr ( f_h \text{ has empirical error } \leq \epsilon) ,
%\]
 In total, for all $j>0$,
  \[
\Pr (\exists f\in N_j \text{ with empirical error } \leq \epsilon)  \geq  \Pr ( f_{j,h} \text{ has empirical error } \leq \epsilon) \geq 1- 2^{-j}.
\]
%
%By similar arguments as in the previous case, for $k\geq 2$ the probability the algorithm stops at $N_k$ is at most $\epsilon_{k-1}/\epsilon$.  
%
Now take $J$ to be the index of the net where the algorithm stops. \red{For $j\geq 2$ it holds  that $P(J=j) \leq 2^{-(j-1)}$.    } 
Thus,
$$H(J)  \leq O(1),$$
 .

By Haussler's theorem (see~\cite{Hassler}), the size of $N_j$ is at most 
$$(4e^2/\epsilon_j)^d = (4e^2 2^j/\eps)^{d}.$$
Therefore,
\[ 
 H(A(S)|J) \leq
 \sum_{j=1} ^\infty  P(J=j) \log |N_j| 
 =  O(d\log (2/\epsilon)).
 \]
Finally,
$$I(S;A(S)) = H(A(S)) \leq H(A(S),J)
\leq H(J) + H(A(S)|J).$$

\end{proof}

\subsection*{More Generally}
\red{The proof of Theorem \ref{nets} together with Lemma \ref{minimax_algo} } suggest a general recipe for controlling the average information complexity {(and hence the average sample complexity)} for pairs of the form $(\CC, \DD)$ (not necessarily binary class or with 0-1 loss).
%and specifically proving average sample complexity for those pairs. 
%The recipe goes as follows for a pair $(\CC,\DD)$.

\begin{itemize}
	\item For every marginal distribution $D$ over $\cX$  from $\DD$, find a sequence of small $\epsilon$-nets. This sequence induces an algorithm that leaks little information, for every symmetric random variable $X \in \DD$ whose marginal distribution is $D$ (even though it is not necessarily i.i.d.).
	\item Use the minimax theorem to find
	an algorithm that leaks little information 
	over all of $\DD$.
%	proof of Lemma \ref{minimax_algo}.

\end{itemize}

It  will be interesting to see if this setting can be extended to the non-realizable case. It is not immediate to apply the principles seen in the proof of Theorem \ref{comp_on_average} to this case. In theory, some samples may require large empirical losses (for proper learners). Since the minimax algorithm is a convex combination of those algorithms, it is hard to say what the empirical error of such an algorithm will be, or how far will the empirical error be from the hypothesis in $\CC$ with an optimal empirical error.

\section{Discussion} \label{Disc}

This work leaves the traditional setting of PAC learning and assumes a less hostile environment for learning.
We introduce game-theoretic perspectives
of the compression learning algorithms perform.
In the standard setting,
Nature is assumed all powerful and can 
make the Learner leak quite a lot of information.
In the average-case scenario,
Nature needs to commit ahead of time
on some probability distribution from which
the eventual concept is generated.
In this case, the minimax theorem allows to 
lower the amount of information that is leaked.

The average-case framework captures some amount of prior knowledge
on the world that the learner can use.
It therefore allows to avoid singular or pathological
cases.

This work suggests an idea that may be useful
in other contexts. Given a class $\CC\subset \cY ^\cX$, 
perform the following four steps.
\begin{enumerate}
\item Define a set of reasonable distributions $\DD$ over $\cX$.
\item Find a collection of $\epsilon$-nets for distributions in $\DD$.
\item Look for a 
distribution over those nets that works well for most distributions in $\DD$. 
\item Given a sample $S$, sample a random
$\eps$-net until finding an hypothesis with small
empirical error.
\end{enumerate}
It seems plausible that this will yield acceptable results for samples that come from the real world. 
All steps above, however, may be quite challenging to implement.

\small

\bibliography{my}

\end{document}